%
%
%
%

\documentclass{acm_proc_article-sp}

\usepackage{xfrac}
\usepackage{courier}
\usepackage{graphicx}
\usepackage{amsmath}
\usepackage{cite}
\usepackage{epstopdf}
\usepackage{verbatim}
\usepackage{changebar}

\begin{document}

\title{Decision Tree Classification on Outsourced Data}

%
%
%
%
%

\numberofauthors{2} 
%
\author{
%
%
\alignauthor
Koray Mancuhan\titlenote{The author was visiting Qatar University
when this work was prepared.}\\
       \affaddr{Purdue University}\\
       \affaddr{305 N University St}\\
       \affaddr{West Lafayette, IN 47906}\\
       \email{kmancuha@purdue.edu}
\alignauthor
Chris Clifton\\
       \affaddr{Purdue University}\\
       \affaddr{305 N University St}\\
       \affaddr{West Lafayette, IN 47906}\\
       \email{clifton@cs.purdue.edu}
}


\maketitle
\begin{abstract}
This paper proposes a client-server decision tree learning method for 
outsourced private data. The privacy model is anatomization/fragmentation:  
the server sees data values, but
the link between sensitive and identifying information is encrypted
with a key known only to clients. Clients have limited processing
and storage capability. Both sensitive and identifying information
thus are stored on the server.  The approach presented also retains 
most processing at the server, and client-side processing is amortized 
over predictions made by the clients.
Experiments on various datasets show that the method produces
decision trees approaching the accuracy of a non-private decision
tree, while substantially reducing the client's computing 
resource requirements.
\end{abstract}

\category{H.2.8}{Database Management}{Database Applications-
Data Mining} \category{H.2.7}{Database Management}{Database 
Administration-\emph{Security, Integrity and Protection}}

\keywords{Anatomy, ${l}$-diversity, Data Mining, Decision Trees} 

\newdef{definition}{Definition}
\newtheorem{theorem}{Theorem}

\section{Introduction}
\label{sec:intro}
Data publishing without revealing sensitive information is an 
important problem. Many privacy definitions have been proposed
based on generalizing/suppressing data ($l$-diversity\cite{ldiversity}, 
k-anonymity \cite{kanon_defn} \cite{kanon_sweeney}, t-closeness 
\cite{tcloseness}).  Other alternatives include value swapping
\cite{CensusRelease} and noise addition (e.g., differential
privacy \cite{DifferentialPrivacy}).
Generalization consists of replacing identifying attribute values
with a less specific version \cite{k_anon_survey}. Suppression can be
viewed as the ultimate generalization, replacing the identifying
value with an ``any'' value.
\cite{k_anon_survey}.
These approaches have the advantage of preserving truth, but
a less specific truth that reduces the
utility of the published data.

Xiao and Tao proposed anatomization as a
method to enforce $l$-diversity while preserving specific
data values\cite{XiaoAnatomy}. Anatomization
splits  tuples across two tables, one containing identifying information 
and the other containing private information.
The more general approach of fragmentation \cite{fragmentation}
divides a given dataset's attributes into two
sets of attributes (2 partitions) such that an encryption mechanism avoids 
associations between two different small partitions. Vimercati et al. extend 
fragmentation to multiple partitions \cite{loose_fragmentation},
and Tamas et al. propose an extension that
deals with multiple sensitive attributes \cite{l_diver_multiple}.
The main advantage of anatomization/fragmentation
is that it preserves the original
values of data; the uncertainty is only in the mapping between individuals and sensitive values.

There is concern that anatomization is vulnerable to 
several prior knowledge attacks \cite{Definetti}, \cite{AnatomyImprov}.
While this can be an issue, \emph{any} method that provides
any meaningful utility fails to provide perfect privacy
against a sufficiently strong adversary \cite{DifferentialPrivacy}.
Legally recognized standards such as the so-called ``Safe Harbor''
rules of the U.S. Healthcare Insurance Portability and Accountability
Act (HIPAA) \cite{HIPAA} allow release of data that bears some
risk of re-identification; the belief is that the greater
good of beneficial use of the data outweighs the risk to privacy.

This paper does not attempt to resolve that issue (although we
use a randomized rather than optimized grouping that provides
resistance to the type
of attacks above.) Instead,
this paper attempts to develop
a simple and efficient decision tree learning method from
an ${l}$-diverse dataset which satisfies ${l}$-diversity
in anatomization. 
The developed decision tree is
embedded into a recent client-server database model 
\cite{OutsourceUpdate,ErhanIFIP}. 

The database model
is a storage outsourcing mechanism that supports anatomization.
In this database model, the client is the owner of the data 
and stores the data at the server in an ${l}$-diverse anatomy scheme. 
The server provides some query preprocessing that reduce client's workload.
The model proposed by Nergiz et al. has four significant
aspects relevant to this paper  \cite{OutsourceUpdate,ErhanIFIP}: 1)
a given data table (or dataset) of a client is split into two partitions, 
a sensitive table (single sensitive attribute) and an
identifier table (multiple identifying or quasi-identifying attributes).
2)  A join key is stored with the data, but encrypted with a key known
only to the client, preventing the server from associating a sensitive
value with the correct identity.  3)  There is also an unencrypted
group-level join key allowing the server to map a group of identifiers
with a group of sensitive attributes, these groups are $l$-diverse.
4)  The server can use this group information to perform some query
processing (or in the case of this paper, tree learning), reducing
the client effort needed to get final results.
The reader is advised to visit \cite{ErhanIFIP,OutsourceUpdate}
for further details and performance aspects of Nergiz et al.'s
database model. 

This paper investigates whether it is possible to learn a decision tree
in the above database model with minimal client resource usage.
We assume that the client has limited storage 
and processing resources, just enough to make small refinements on 
any model learned at the server. The remainder of this paper 
refers to the server of Nergiz et al.'s client-server database model as a cloud 
database server (CDBS). Terms ``CDBS'' and ``server'' are used interchangeably.

\subsection{Related Work}
There have been studies in how to mine anonymized data.
Fung et al. \cite{FungTDS} give a top-down 
specialization method (TDS) for anonymization so that the anonymized data 
allows accurate decision trees. Chui et al. \cite{ChuiItemSet} and Leung et al. 
\cite{LeungPattern} address the frequent itemset mining problem. Ngai et al. 
proposes a clustering algorithm that handles the anonymized data as 
uncertain information \cite{NgaiClustering}.
Kriegel et al. propose
a hierarchical density based clustering method using fuzzy objects \cite{HierarClustering}. 
Xiao et al. discusses the problem of distance calculation for uncertain objects 
\cite{DistanceUncertain}. Nearest Neighbor classification using generalized data is 
investigated by Martin \cite{MartinKnn}. Zhang et al. studies Naive Bayes 
using partially specified data \cite{NBCtaxonomy}. Dowd et al. studies decision 
tree classifier with random substitutions \cite{RandSubsTree}. Kantarcioglu 
et al. proposes a new Support Vector classification and Nearest Neighbor 
classification that deals with anonymized data \cite{kAnonSvm}. They extend
the generalization based anonymization scheme to keep all necessary 
statistics and use these statistics to build effective classifiers. Freidman et al.
investigate learning C4.5 decision trees from datasets that satisfy differential
privacy \cite{TreeDiffPriv}. Jagannathan et al. propose a tree classifier based
on random forests that is built from differentially private data \cite{RandForDiffPriv}.

These methods are all based on generalization/suppression or noise
addition techniques. Anatomized data provides additional detail that 
has the potential to improve learning, but also additional uncertainty that 
must be dealt with.  In addition, the database model of \cite{OutsourceUpdate,
ErhanIFIP} provides exact matching information that can be extracted 
only at the client; providing additional opportunities to improve learning.

Another approach related to mining anatomized data in the client-server
database model is vertically
partitioned data mining (e.g., \cite{CommunicationEffTree,ppdt}).
Vertically partitioned
data mining makes strong assumptions about data partitions
and what data can be shared, and typically assume that each
party holding data has significant computational power.
For example, some decision tree techniques assume that two tuples
of two partitions can be linked directly to each other, that the tuples are
ordered in the same way; and that the class labels are known for both 
partitions.
In one sense, our problem is easier, as one party (client) can see all
the data.  However, we assume this party (again client) does not 
have the resources to even store all the data,
much less to build the tree. 

\subsection{Contributions}
The model of \cite{OutsourceUpdate} would allow a client to construct
an exact decision tree by downloading the identifier
and sensitive tables of a dataset (or data table), decrypting
the join key, and joining the identifying and sensitive information
(rebuilding the original dataset). The remainder of this paper will use
terms dataset and data table interchangeably.
The goal of this paper is to reduce resource requirements at
the client, by doing as much work as possible at the server.
The challenge is that the server is not allowed to know exact
mappings between identifying and sensitive information (and
server can't know without background knowledge these mappings 
due to $l$-diversity of anatomization scheme).

We assume that the class attribute is not the sensitive attribute,
as effectively learning this at the server implies a de-facto
violation of $l$-diversity.  However, predicting values that fall
into identifying information (e.g., predicting geographic location
given other demographics and a disease) does not necessarily face this
difficulty.  We thus perform processing involving identifying
attributes at the server.  The client then needs only do the
necessary processing to utilize the sensitive attribute to complete
the tree.

The key novelty is that this final processing is:
\begin{enumerate}
\item Performed only when a prediction would make use of that information, and
\item The cost is amortized over many predictions.
\end{enumerate}
The resource requirements (including memory, CPU cost, and communications)
remain relatively low per prediction, thus allowing this to be
performed on lightweight clients that are assumed to have limited
storage capabilities.

We first 
give a set of definitions and define a prediction task for anatomized data. We then detail our 
method, along with cost and privacy discussions. 
Section \ref{sec:exper} shows performance with experiments
on various UCI datasets. The
performance evaluation includes execution time, memory requirement
and classification accuracy. We conclude the paper with a brief summary
and future directions.

\section{Definitions and Prediction Task}
\label{sec:def}

We give a set of definitions that are required to explain our work,
and use them to define the prediction task for anatomized data.

\begin{definition}
A dataset ${D}$ is called person specific dataset for population ${P}$
if each tuple ${t \in D}$ belongs to a unique individual ${p \in P}$.
\end{definition}

The person specific dataset ${D}$ has ${A_1, \cdots, A_d}$ identifier attributes and a sensitive 
attribute ${A_s}$. 
We use ${D.A_i}$ to refer to the $i$th 
attribute of a person specific dataset ${D}$. Similarly, we use ${t.A_i}$ to refer
to the $i$th attribute of a tuple in a person specific dataset ${D}$ (either ${1 \leq i \leq d}$ or
${i=s}$ holds). 

\begin{definition}
A group ${G_j}$ is a subset of tuples in dataset ${D}$ such that ${D=\cup_{j=1}^{m} G_j}$, 
and for any pair ${(G_{j1},G_{j2})}$ where ${1 \leq i \neq j \leq m}$,
${G_{j1} \cap G_{j2}= \emptyset }$.
\end{definition}

\begin{definition}\label{defn:ldiverse}
A set of groups is said to be l-diverse if and only if for all groups ${G_j}$
${\forall v \in \pi_{A_s } G_j, \frac{freq(v,G_j )} {\vert G_j \vert} \leq \frac{1}{l}}$ where ${A_s}$ is
the sensitive attribute in ${D}$, ${freq(v,G_j )}$ is the frequency of ${v}$ in ${G_j}$ and ${|G_j |}$ is
the number of tuples in ${G_j}$.
\end{definition}

We define anatomy as in \cite{ErhanIFIP,OutsourceUpdate}.

\begin{definition}
Given a person specific dataset ${D}$ partitioned in m groups using l-diversity
without generalization, anatomy produces a identifier table ${IT}$ and 
a sensitive table ${ST}$ as follows. ${IT}$ has schema
\begin{center}
${(A_1,...,A_d,GID,ESEQ)}$
\end{center}
where ${A_i \in I_T}$ for ${1 \leq i \leq d= \vert I_T \vert}$, ${I_T}$  is the set of identifying 
attributes in ${D}$, ${GID}$ is the group id of the group and ${ESEQ}$ is the encryption of a 
unique sequence number, ${SEQ}$. For each ${G_j \in D}$ and each tuple ${t \in G_j}$ 
(with sequence number ${s}$), ${IT}$ has a tuple of the form:
\begin{center}
${(t.A_1,...,t.A_d,j,E_k (salt,s))}$
\end{center}
The ST has schema
\begin{center}
${(SEQ,GID,A_s)}$
\end{center}
where ${A_s}$ is the sensitive attribute in ${D}$, ${GID}$ is the group id of the group 
and ${SEQ}$ is a (unique but randomly ordered)
sequence number for that tuple in ${ST}$, used as an input 
for the ${ESEQ}$ of the corresponding tuple in ${IT}$. For each ${G_j \in D}$ and each 
tuple ${t \in G_j}$, ${ST}$ has a tuple of the form:
\begin{center}
${(s,j,t.A_s)}$
\end{center}
\end{definition}

The reason why the anatomy definition \cite{XiaoAnatomy} is extended 
in \cite{ErhanIFIP,OutsourceUpdate} and in this paper is that
CDBS operations like selection, insertion, deletion etc. require the 
consideration of the (encrypted where necessary) sequence number.
The
sequence number is in fact used to rebuild the original mapping between the identifying
attributes and the sensitive attributes of a given tuple (cf. Section \ref{sec:intro}). The
sequence numbers and encrypted sequence numbers are created during the 
splitting phase of an outsourced table (cf. Section \ref{sec:intro}). Here is a
brief explanation of how this mechanism works in several database operations:

Firstly, $ESEQ$ field values in the identifying table are decrypted by the client to 
find the true sequence number of given tuples. Secondly, the true sequence 
number values in the identifying table is equi-joined with the sequence number $SEQ$ 
field values in the sensitive table. The database has some sequence numbers in 
the sensitive table $ST$ which are same as the true sequence numbers
 (this is guaranteed in the initial splitting phase, see 
Section \ref{sec:intro}). The original tuples are eventually rebuilt after this equi-join 
operation. The reader is directed to \cite{ErhanIFIP,OutsourceUpdate} for further 
issues about CDBS design. In this paper, we will use sequence number values to 
do client-side refinement of the decision tree. 

\begin{definition}\label{defn:simplediverse}
An l-diverse group G is said to have a simple l-diverse distribution if 
\begin{center}
${\forall p \in G}$ and ${\forall v \in \pi_{A_s } G_j, P(p.A_s=v) = \frac{(freq(v,G_j))} {\vert G_j \vert} }$
\end{center}
where ${p}$ denotes an individual and ${A_s}$, ${freq(*)}$ are as in Definition \ref{defn:ldiverse}.
\end{definition}

\begin{definition}
A person specific dataset ${D}$ has \emph{simple l-diverse distribution} if for 
every group ${G_i \in D}$ where ${1 \leq i \leq m}$ holds, ${G_i}$  has a simple
l-diverse distribution according to Definition \ref{defn:simplediverse}.
\end{definition}

This paper focuses on collaborative decision tree learning between the client and
CDBS. Next,
we define the principal components of the collaborative decision tree.

\begin{definition}
Given two partitions ${IT}$ and ${ST}$ of a person specific dataset ${D}$, 
a \emph{base decision tree (BDT)} is a decision tree classifier that is built from ${A_1,\cdots,A_d}$
attributes ${(I_T)}$. Given a BDT that has leaves $Y$; every leaf ${y \in Y}$ has tuples in the 
following format:
\begin{center}
${(t.A_1, \cdots ,t.A_d,j,E_k (salt,s))}$
\end{center}
${j,E_k (salt,s)}$ fields are ignored when the BDT is learned.
Note that the
BDT maintains the simple $l$-diverse distribution which is the privacy constraint of CDBS. 
Given a leaf ${y \in Y}$, each tuple ${t \in y}$ is equally likely to match${l}$
different values of ${A_s}$. The privacy issues with the base decision tree are discussed 
in the next section.
\end{definition}

\begin{definition}
Given a base decision tree ${BDT}$, the leaves ${Y \in BDT}$; a leaf ${y \in Y}$ is called a
\emph{refined leaf} if and only if it \emph{points} to a sub-tree rooted at ${y}$ that 
is encrypted (using symmetric key encryption).
\end{definition}

We will elaborate in the privacy discussion of the next section how and why a base decision
tree has encrypted sub-trees.

\begin{definition}
Given a base decision tree ${BDT}$, the leaves ${Y \in BDT}$; a leaf ${y \in Y}$ is called 
\emph{unrefined leaf} if and only if it \emph{doesn't point} to a sub-tree rooted at ${y}$.
\end{definition}

In the remainder of this paper, ${Y}$ is used to note the leaves of a decision tree.

We finish the discussion in this section by defining the prediction task of our classifier. 
The prediction task is to predict an identifying attribute (${A_i \in I_T}$) given other
identifying attributes (${A_j}$  satisfying ${1 \leq i \neq j \leq d}$) and the sensitive 
attribute (${A_s}$).

A real life example would be learning from
patient health records.  The U.S. Healthcare laws protect
individually identifiable health information \cite{HIPAAfinal}.
An anatomized database, while it contains individually identifiable
information, does not link this directly to the (sensitive) health
information.
Suppose we have public directory information as identifying
attributes including \texttt{zipcode}, \texttt{gender}, \texttt{age},
as well as specific identifiers such as name (which are presumably
uninteresting for building a decision tree.)  We also have a
sensitive table consisting of \texttt{diagnosis}.  
Suppose the goal was to identify locations where people with
particular backgrounds (including health conditions) live,
through learning a decision tree with class value \texttt{location}.
While this seems a somewhat contrived example, it would likely
have identified the infamous Love Canal district\cite{LoveCanal}
as an unusually common location for those with the diagnosis
\emph{birth defect}.

\section{Anatomization Decision Trees}

We investigate three possible approaches to build decision trees from
anatomized data in Nergiz et al.'s client-server database model. We call
this family of decision trees as \emph{anatomization decision trees}, 
because they are all built from anatomized data.
Each approach differs according to how ${IT}$ and ${ST}$ tables are used.

In the discussion of anatomization decision trees, algorithms that require
encryption are assumed to use AES-128 symmetric key encryption.
We first discuss two baseline approaches:  pure-client side 
learning, and pure server-side learning. Section \ref{sec:ourmethod} explains the
collaborative decision tree learning. We finally 
compare collaborative decision tree learning with the 
baselines in Section \ref{sec:compare}. 
 
\subsection{Baseline Approaches: CDBS Learning and Client Na\"{i}ve Learning}
\label{sec:baselines}
 
The baseline approaches compose two different scenarios for learning. 
The first scenario is that the CDBS learns a base decision tree using
only the ${IT}$ 
partition of dataset ${D}$ (cf. Section \ref{sec:def}). 
The second scenario is that client learns a decision
tree using the person specific dataset ${D}$ retrieved from the CDBS.
We call the first learning scenario CDBS learning
and the second learning scenario client na\"{i}ve learning. 

In client na\"{i}ve learning,
the client retrieves the person specific dataset $D$ by getting
all the tuples in the respective $IT$ and $ST$ partitions of $D$. $IT$ and
$ST$ partitions are joined together using the join key and the non-anatomized set of 
records in dataset ${D}$ is obtained (cf. Sections \ref{sec:intro} and \ref{sec:def}). The client learns
decision tree from this non-anatomized set of records in dataset $D$. In other words, the client
rebuilds from its outsourced data the non-anatomized version (original version) 
of dataset $D$ and learns the decision tree itself without any attribute 
uncertainty. Client na\"{i}ve learning brings up some vital issues and respective
solutions below:

The privacy aspect of the decision tree is an issue. The client learns
the decision tree from the non-anatomized version of ${D}$, so the identifying
attributes and the sensitive attribute are associated in the decision tree.
The client has to store this model at the CDBS (limited storage resource
assumption, see Section \ref{sec:intro}), but since
the model is built from non-anatomized dataset, it may reveal information
violating privacy constraints (particularly in conjunction with
the anatomized data).  The solution is to encrypt the decision tree and to store
the encrypted tree at CDBS.

Encrypting the decision tree raises the issue of how to make an inference of the class label
for a new instance. The CDBS cannot make the inferences because the stored
tree is encrypted. This leads to the following approach:
each time the class label of a new tuple is predicted,
the client retrieves the encrypted decision tree, 
decrypts the encrypted decision tree and makes
the prediction. This makes the inference phase quite expensive for the
client in client na\"{i}ve learning. 

The other extreme is CDBS learning: the server constructs a base
decision tree using only the identifying information. 
Base decision tree makes the prediction operation very
easy and low cost for the client. Given a predicted 
tuple ${t_p}$ and a base decision tree for predicting class label ${A_c \in I_T}$;  
the client just deletes the attribute field ${t_p.A_s}$ (an ${O(1)}$ time 
operation). Then, it sends the rest of the tuple ${t_p}$ to the CDBS 
and CDBS makes the tree inference. The client has ${O(1)}$ inference cost
and network overhead.
This low cost for client is the main advantage of CDBS learning.
The drawback of CDBS learning is that the base decision tree ignores the
sensitive attribute. This is not a big drawback
if ${A_s}$  is a weak predictor of the class label. On the other hand, the base
decision tree's accuracy is likely to be low if ${A_s}$  is a strong 
predictor of the class label.

\begin{figure}
\begin{center}
\includegraphics[width=50mm,height=35mm]{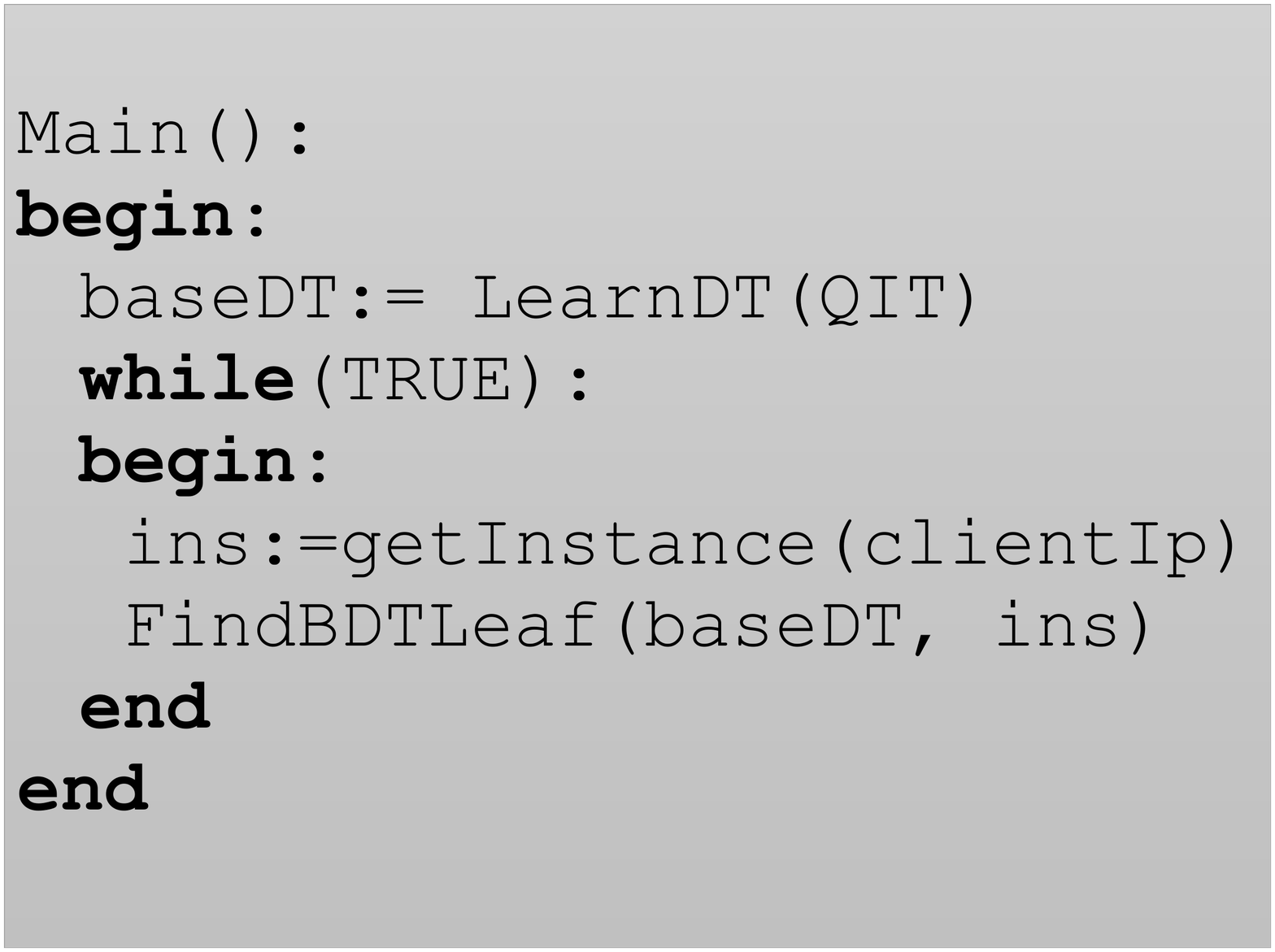}
\caption{Collaborative Learning Main Function (Called by Server)}
\label{fig1}
\end{center}
\end{figure}

\subsection{Collaborative Decision Tree Learning}\label{sec:ourmethod}

One way to mitigate the drawbacks of CDBS learning and client na\"{i}ve learning 
is the usage of grouping statistics among anatomy groups of the person specific 
dataset ${D}$. Grouping statistics are the distribution of ${A_s}$ 
among anatomy groups ${G_i}$ provided that ${1 \leq i \leq m}$ and ${\cup_m G_i =D}$
hold. Grouping statistics might have the potential to determine an interesting
decision tree split for a class label ${A_i \in I_T}$. However, to avoid
correlation-based attacks, we assume the insertion
operation of CDBS is designed to create groups randomly while satisfying
the  simple $l$-diverse distribution \cite{OutsourceUpdate}.
It is unlikely to
find an interesting pattern
for split using group statistics.

This paper proposes a new collaborative decision tree algorithm involving CDBS and
client collaboration. Given a dataset ${D}$ partitioned on ${IT}$ and ${ST}$, $IT$ and $ST$
partitions stored on CDBS, a class label
${A_i \in I_T}$; CDBS initiates building the collaborative decision tree. First, the server builds a base
decision tree.
Then, the base
decision tree leaves ${Y}$ are improved by the client. For every leaf ${y \in Y}$, the
improvement is a sub-tree learned by the client. The client uses the tuples in leaf ${y}$
to learn an improved sub-tree. 
Figures \ref{fig1}, \ref{fig2} and \ref{fig3} give pseudo code of the collaborative 
decision tree learning process.

Figure \ref{fig1} shows the \texttt{Main()} function that is called by CDBS to initiate 
collaborative learning.  The
\texttt{LearnDT(IT)} function call builds the base decision tree from 
the $IT$ partition.
The client improvements on the base decision tree are made
on the fly when doing predictions.
To make a prediction, a tuple
${(ins.A_1, \cdots , }$ ${ins.A_{i-1}, ins.A_{i+1}, \cdots , ins.A_d)}$
is sent to the CDBS.
The \texttt{getInstance()} function call 
receives the sent tuple \texttt{ins} (cf. Figure \ref{fig1}). \texttt{FindBDTLeaf(baseDT, ins)} 
function is called by CDBS for every predicted tuple \texttt{ins} 
once the predicted tuple \texttt{ins} is received by \texttt{getInstance()} calls
(cf. Figure \ref{fig1}). 

\begin{figure}
\begin{center}
\includegraphics[width=75mm,height=60mm]{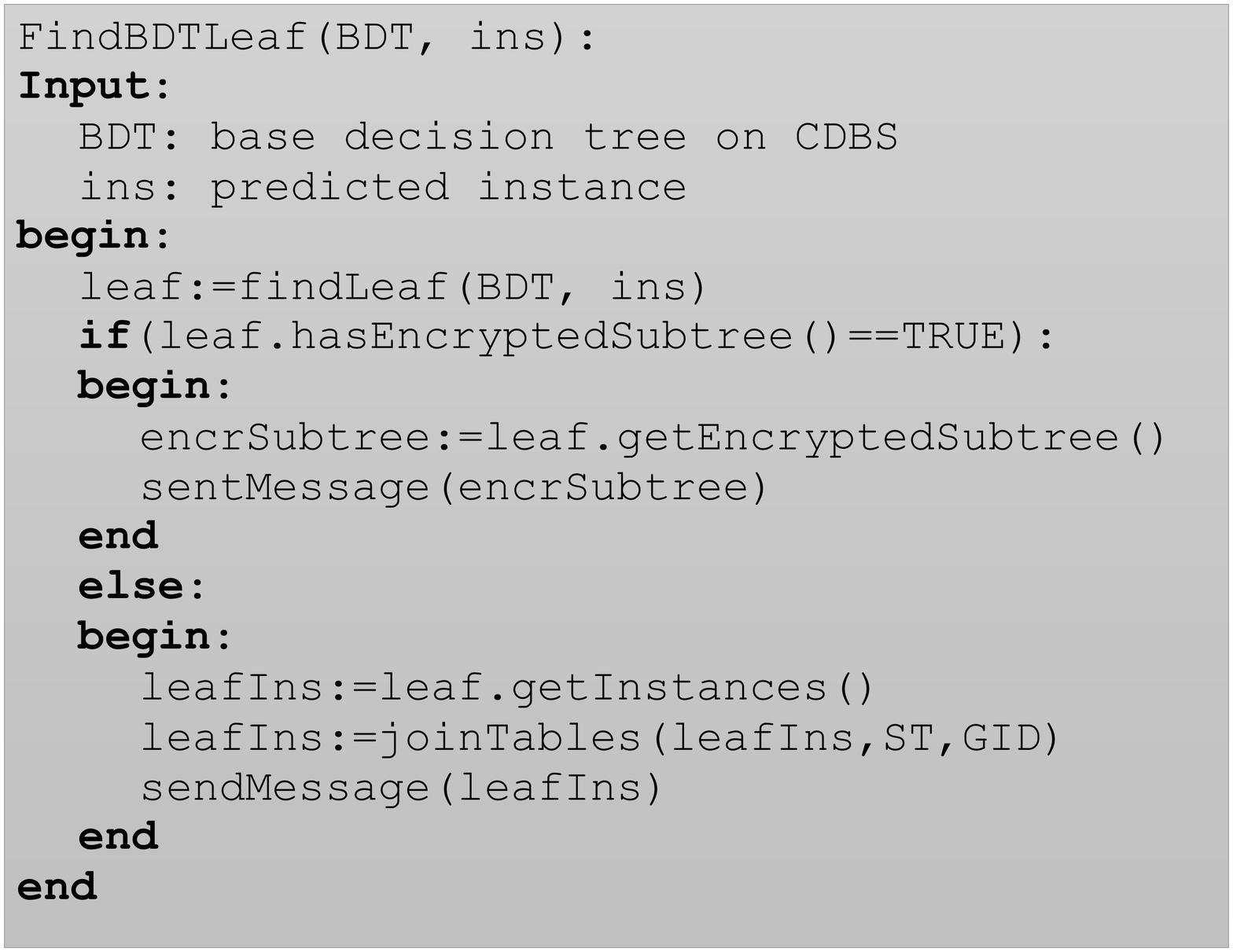}
\caption{Collaborative Learning \texttt{FindBDTLeaf} Function (Called by Server)}
\label{fig2}
\end{center}
\end{figure}

The
\texttt{FindBDTLeaf} function essentially finds the appropriate base decision tree leaf
${y \in Y}$ for the client to complete (if not already done) and sends leaf $y$ to the client (cf. Figure \ref{fig2}). Given tuple \texttt{ins} and the base decision tree 
\texttt{baseDT}, the \texttt{findLeaf(BDT,ins)} function call
finds the correct leaf ${y \in Y}$ using a usual decision tree inference with attribute values
${(ins.A_1, \cdots , ins.A_{i-1},}$ ${ins.A_{i+1}, \cdots , ins.A_d)}$.
Then, \texttt{FindBDTLeaf} verifies if ${y}$ points to an
encrypted sub-tree that was previously learned by a client (\texttt{if} statement
in pseudocode).
(The subtree is encrypted to ensure privacy constraints are satisfied,
this will be discussed in more detail later.)
If ${y}$ points to an encrypted sub-tree, it sends the client
the encrypted sub-tree as a response message (\texttt{sentMessage(encrSubTree)} 
function call). Otherwise, CDBS
sends the tuples belonging to ${y}$ by function call \texttt{sentMessage(leafIns)}.

All
tuples ${t \in y}$ are partitioned across ${IT}$ and ${ST}$ on CDBS. 
Reconstructing the original tuples $t \in y$ is done on the client side (\texttt{Inference()} function call, cf. Figure \ref{fig3}, explained later).
The
\texttt{joinTables(leafIns, ST, GID)} function call prepares the message \texttt{leafIns} including
the following tuples: \texttt{leafIns} include every tuple ${t \in y}$ with all combinations
of $l$ sensitive attributes. In other words function \texttt{joinTables()} matches every tuple $t \in y$ with all $l$
potential sensitive attributes using the $GID$ field as the join key (a group-level join, giving all possible tuples given the l-diverse dataset, not just the
true matching values).
The explanation of \texttt{Inference()} function will clarify why this
join operation is done (cf. Figure \ref{fig3}).

\begin{figure}
\begin{center}
\includegraphics[width=75mm,height=90mm]{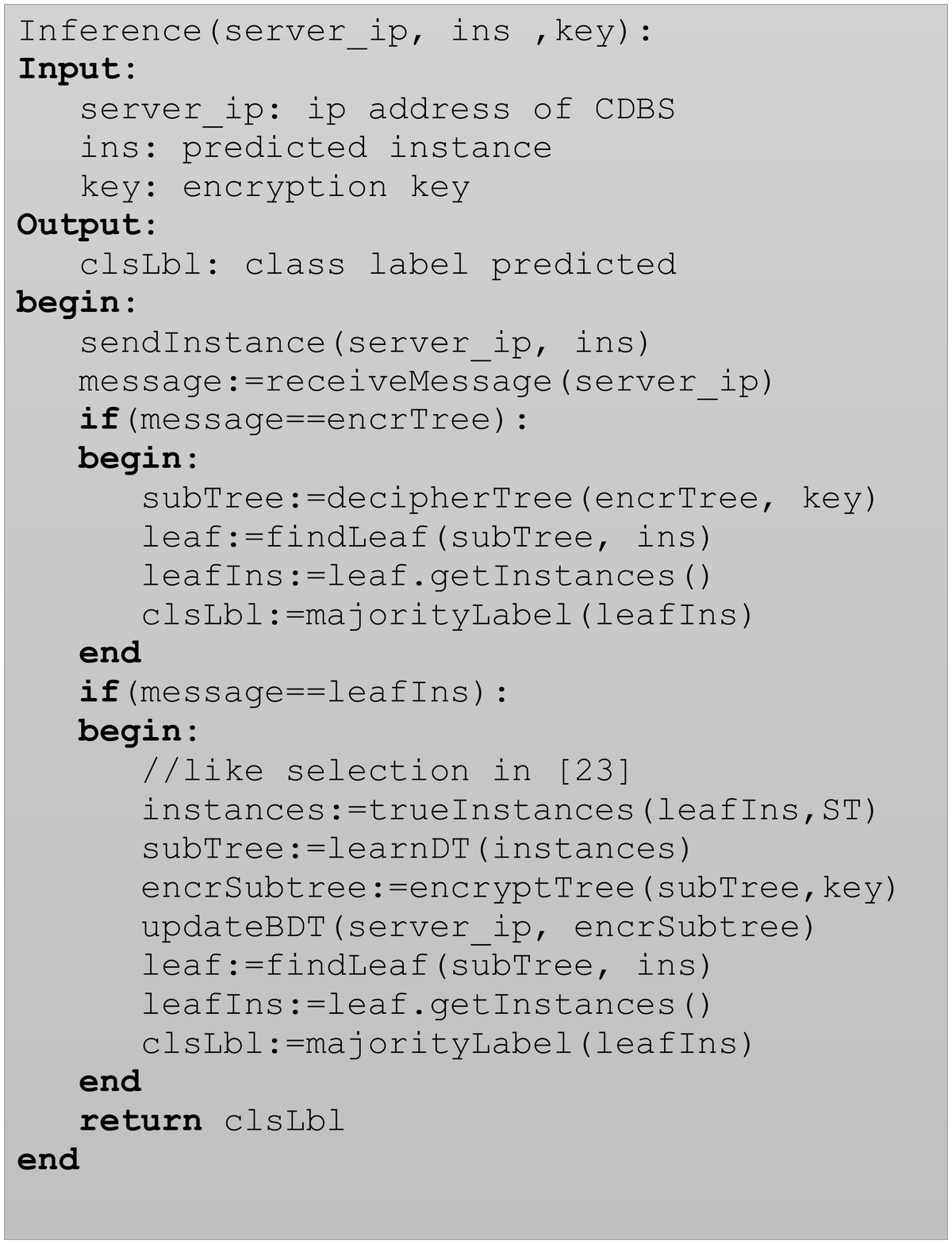}
\caption{Collaborative Learning \texttt{Inference} Function (Called by Client)}
\label{fig3}
\end{center}
\end{figure}

A client calls the \texttt{Inference()} function to make predictions or
on the fly improvements (cf. Figure \ref{fig3}). The client sends 
a predicted tuple \texttt{ins} to the CDBS using the
\texttt{sendInstance()} function. It receives the \texttt{message}
sent by function \texttt{FindBDTLeaf()} using \texttt{receiveMessage()}.
As mentioned in the \texttt{FindBDTLeaf()} discussion, the \texttt{message} can 
be either an encrypted sub-tree (\texttt{encrTree}) or tuples $t$ of leaf $y$ (\texttt{leafIns}).
\texttt{leafIns} is in fact all tuples $t \in y$ such that every tuple $t$
is matched with all $l$ potential sensitive attribute values.
If the \texttt{message} is an encrypted sub-tree (\texttt{encrTree}), 
\texttt{Inference()} just decrypts the encrypted
sub-tree using \texttt{decipherTree()} function, finds the appropriate
leaf $y^\prime$ as in regular tree inference using \texttt{findLeaf()} function and 
predicts the class label of tuple \texttt{ins} by taking majority of the
class labels in $y^\prime$ using \texttt{majorityLabel()} function.  If 
\texttt{message} is \texttt{leafIns}, the \texttt{Inference()} function
reconstructs the non-anatomized leaf instances (\texttt{instances}) from \texttt{leafIns}: it
decrypts the encrypted sequence numbers $ESEQ$ in identifying
attributes of \texttt{leafIns} to find true sequence numbers 
and eliminates the tuples $t$ of \texttt{leafIns} 
that don't have the sensitive table sequence numbers $SEQ$ the same as
true sequence numbers. \texttt{trueInstances()} function does the former operation. 
We discussed earlier in Section \ref{sec:def} how
this sequence number fields are designed and how they are used in Nergiz et al.'s
client server database model \cite{ErhanIFIP, OutsourceUpdate}.  \texttt{Inference()} 
learns a sub decision tree (\texttt{subTree}) from \texttt{instances} 
using \texttt{learnDT()} function, encrypts \texttt{subTree} (\texttt{encryptTree()} function
call) and sends \texttt{subTree} back to CDBS for updating as we discussed earlier in \texttt{FindBDTLeaf()}.
Finally, \texttt{Inference()} finds the appropriate
leaf $y^\prime$ as in regular tree inference using \texttt{findLeaf()} function and 
predicts the class label of tuple \texttt{ins} by taking majority of the
class labels in $y^\prime$ using \texttt{majorityLabel()} function.

We finish the discussion in this subsection by considering
the encryption of the client's improved sub-trees.
Encryption is necessary due to privacy concerns.
Once the sub-tree is built, 
it has leaves having  tuples
in ${(t.A_1, \cdots ,t.A_d,t.A_s)}$ format. Moreover, the sub-tree also
has splits with the true values of ${A_s}$. Thus, whole sub-tree should
be encrypted so that no additional information regarding the
correlation between tuples and sensitive values is provided to the CDBS
when the sub-tree is stored.
The symmetric key encryption of sub-tree preserves the privacy of
person specific dataset $D$ according to following theorem.

\begin{theorem}
(Privacy Preserving Learning): Given a person specific dataset ${D}$ on
CDBS having a simple $l$-diverse distribution, the collaborative decision tree
learning preserves the simple $l$-diverse distribution of ${D}$.
\end{theorem}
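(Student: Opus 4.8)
The plan is to characterize exactly what the CDBS, regarded as the potential adversary, can observe over an entire run of the collaborative protocol, and then to argue that every component of this view is either a deterministic function of data the server already holds or is computationally hidden by encryption. First I would enumerate the server's view: (i) the outsourced anatomized tables $IT$ and $ST$; (ii) the base decision tree produced by \texttt{LearnDT(IT)}; (iii) the group-level joins \texttt{leafIns} built by \texttt{joinTables}; (iv) every predicted instance \texttt{ins} received through \texttt{getInstance}; and (v) the encrypted sub-trees \texttt{encrSubTree} that clients store back at the server. The objective is to show that, conditioned on this full view, for every group $G_j$ and every individual $p \in G_j$ the posterior on the sensitive value is unchanged, i.e.\ $P(p.A_s = v) = \frac{freq(v,G_j)}{|G_j|}$, which is precisely the simple $l$-diverse distribution of Definition~\ref{defn:simplediverse}.

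I would then dispatch the five observables in turn. Item (i) fixes the baseline, since by hypothesis $D$ already has the simple $l$-diverse distribution. Item (ii) is a deterministic function of $IT$ alone --- the \texttt{LearnDT} call never reads $A_s$ or the join key --- so it cannot increase the server's information about the identifier-to-sensitive mapping beyond what $IT$ already supplies. Item (iii), the group-level join, attaches to each identifier tuple all $l$ sensitive values present in its $GID$ group; as this is exactly the candidate set already implied by $l$-diverse anatomization, it reveals nothing beyond the group frequencies. Item (iv) carries only the identifying attributes $(ins.A_1,\dots,ins.A_{i-1},ins.A_{i+1},\dots,ins.A_d)$; crucially the sensitive field $ins.A_s$ is withheld at the client and consulted only inside the client's \texttt{Inference} traversal, so prediction queries disclose nothing about $A_s$ for any individual in $D$. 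The single remaining channel is item (v).

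The crux of the argument, and the step I expect to be the \emph{main obstacle}, is item (v): the sub-trees the client learns from the reconstructed non-anatomized leaf instances genuinely contain the true $(t.A_1,\dots,t.A_d,t.A_s)$ associations and split on the real values of $A_s$, so their plaintext would violate privacy. Here I would invoke the semantic security of the AES-128 scheme: because the client holds the only key, each stored \texttt{encrSubTree} is computationally indistinguishable from an encryption of an unrelated message and therefore contributes no non-negligible information about the plaintext correlations. The care required is twofold: to state the conclusion as \emph{computational} preservation of the distribution (equality up to a negligible advantage against the cipher) rather than information-theoretic equality, and to verify that no auxiliary metadata leaks --- ciphertext length, which leaf was refined, and the timing of refinements must all be independent of the true sensitive assignments. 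The latter holds because each refinement is triggered purely by the arrival of a prediction whose routing through \texttt{baseDT} depends only on identifying attributes. Combining the five items, the server's posterior on each $p.A_s$ collapses to the group frequency, establishing that collaborative learning preserves the simple $l$-diverse distribution.
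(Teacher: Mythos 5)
Your proposal is correct and follows essentially the same route as the paper's own proof: both argue that the server-side computation (the base tree and the group-level join) is a function only of information the CDBS already legitimately holds, that prediction queries and refinement triggers depend only on identifying attributes, and that the true $(t.A_1,\dots,t.A_d,t.A_s)$ associations returned by the client are protected solely by the semantic security of the symmetric encryption. Your version is in fact somewhat more careful than the paper's --- the explicit enumeration of the server's view, the observation that the guarantee is therefore computational rather than information-theoretic, and the check that refinement timing and leaf-access patterns leak only identifying information are all points the paper states only informally or leaves implicit.
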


\begin{proof}
Given a snapshot ${IT(i)}$ and ${ST(i)}$ of ${D}$ on CDBS at time $i$, 
suppose that a collaborative decision tree is learned using the
algorithm in Figures \ref{fig1}--\ref{fig3}.
CDBS learns the associations within the
identifying attributes (Figure \ref{fig1}). The CDBS is allowed to learn such
associations according to its privacy
definition \cite{ErhanIFIP} \cite{OutsourceUpdate}. 
These associations do not provide any background information linking
identifying attributes and sensitive attribute. The Client learns all associations 
within identifying attributes and the sensitive attribute (cf. Figure \ref{fig3}), 
but no additional information is provided back to the server (except in encrypted form.)
The simple $l$-diverse distribution is maintained for snapshots ${IT(i)}$, ${ST(i)}$ 
at ith time. The inference operation at jth time (${1 \leq i \leq j \leq \infty}$) is done
with CDBS and client. The base decision tree has only associations between
identifying attributes, so the CDBS part of inference is safe. The
encrypted sub-trees are used on client side of inference, so assuming
semantically secure encryption this gives no further information to
the CDBS.
Throughout predictions, some base decision tree leaves might never be visited,
but this only discloses that certain \emph{identifying} information
has not come up for prediction.
This is not a disclosure case about sensitive information.
CDBS cannot know whether the sensitive attribute has a better information for the 
unvisited leaves' tuples as sensitive attributes' positive/negative correlation or 
uncorrelation with identifying attributes would not be known
without any background knowledge. The encrypted subtrees of visited leaves
do not let CDBS learn this kind of correlation information.
Consequently, collaborative decision tree learning preserves 
the simple l-diverse distribution in inference at time ${j}$.
\end{proof}

Our privacy analysis doesn't include the case where there are multiple clients that
outsource different data. 
The reader might be concerned that if there
are multiple clients who outsource their data in anatomization format, some clients' identifying
information can be other clients' sensitive information. This cross correlation might create
privacy concerns about the whole client-server database model. This is an issue dealt in 
the original Nergiz et al.'s client-server database model  \cite{OutsourceUpdate,ErhanIFIP}. 
Since we have shown that our decision tree learning reveals no additional
information, it cannot result in privacy violations provided the underlying
database does not violate privacy constraints.

\subsection{Cost Discussion}\label{sec:compare}

It is difficult to estimate a practical average cost savings 
for collaborative decision tree learning, as it depends
not only on the data, but on the client's memory, processing,
and communication resources.

If the identifying attributes are bad predictors, the base decision
tree will be complex. A complex decision tree model means that
it has many splits yielding small leaves, requiring only a
small amount of client memory and computation during the
prediction phase.
The client needs to build more complex sub-trees to
achieve good prediction.
However, small leaves means that the cost per sub-tree will
be small, even though the total cost (amortized over many
predictions) is high.
If the identifying attributes are good predictors, the
base decision tree will be simple.  A tree having few
splits will yield fewer, but larger leaves.
This increases the cost for the first prediction, but
also increases the likelihood that future predictions will
hit an already completed subtree.

In either case, the client requires less memory than client na\"{i}ve
learning, as it needs to hold at most the tuples that reach a leaf,
along with all possible sensitive values for those tuples. 
The next theorem
gives an upper bound for collaborative decision tree learning cost
under reasonable assumptions and feasible alternatives.

\begin{theorem}
(Cost Upper Bound) Given a person specific
dataset ${D}$, ${IT}$ and ${ST}$ partitions on CDBS storing ${D}$,
similar number of splits in base decision tree, client improvement
sub-trees and client na\"{i}ve learning tree; the client cost of collaborative
decision tree learning cannot be bigger than client na\"{i}ve learning's cost.
\end{theorem}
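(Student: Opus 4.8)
The plan is to decompose the client's total cost in each scheme into its constituent operations---communication, tuple reconstruction (decryption of $ESEQ$ and the equi-join on sequence numbers), decision-tree construction, and per-prediction decryption plus inference---and to show that the collaborative scheme is no more expensive term by term, with the decisive savings coming from tree construction. The guiding idea is that the client na\"{i}ve tree and the collaborative tree (the base tree composed with the client sub-trees) learn the same target and, by the similar-number-of-splits hypothesis, have comparable total structure; the only real difference is \emph{who} pays for the upper splits.

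First I would argue that tree construction is the asymptotically dominant cost. At an internal node holding $n_v$ tuples, selecting the best split over the $d$ identifying attributes costs $\Theta(n_v d)$ (or $\Theta(n_v d \log n_v)$ with sorting), so the total build cost is $\sum_v n_v d$ over internal nodes, which is superlinear in $\vert D \vert$, whereas reconstruction and communication are linear in $\vert D \vert$. Thus for an upper bound it suffices to control the build term and then check that the linear terms cannot reverse the inequality.

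The central step is an additive decomposition of the na\"{i}ve tree's build cost along the tree structure. Because decision-tree learning is recursive, the cost of building the client na\"{i}ve tree on $D$ equals the cost of the splits at the upper levels---those that the base decision tree reproduces---plus the sum, over the induced leaf partitions $y$, of the cost of building the corresponding sub-tree on the $\vert y \vert$ tuples reaching $y$. Writing $f_{\text{naive}} = f_{\text{BDT}} + f_{\text{sub}}$ for this split, the similar-number-of-splits hypothesis guarantees that the collaborative tree realizes the same set of splits, so the client's sub-trees are exactly the lower portion accounted for by $f_{\text{sub}}$. In the collaborative scheme the \emph{server} bears $f_{\text{BDT}}$ and the client bears only $f_{\text{sub}} = f_{\text{naive}} - f_{\text{BDT}} \le f_{\text{naive}}$, which gives the bound on the dominant term.

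Finally I would dispatch the linear terms. In na\"{i}ve learning the client reconstructs all $\vert D \vert$ tuples once; in collaborative learning it reconstructs leaf by leaf, and since the leaves partition $D$ the reconstructed true tuples again total $\vert D \vert$, the only overhead being the group-level join presenting $l$ candidate sensitive values per tuple before filtering---an $O(l\vert D \vert)$ term with $l$ constant that is dominated by the superlinear build cost. The per-prediction decryption is likewise cheaper for collaborative learning, since the client decrypts one small cached sub-tree rather than the entire encrypted tree. The hard part will be making the central step precise: under the similar-splits hypothesis, pinning down that the collaborative sub-trees \emph{coincide} with the lower portion of the na\"{i}ve tree, so that $f_{\text{BDT}}$ is genuinely subtracted rather than merely bounded; once that correspondence is fixed, the remaining term-by-term comparison is routine.
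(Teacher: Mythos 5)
Your central step contains a genuine gap: the claimed decomposition $f_{\text{naive}} = f_{\text{BDT}} + f_{\text{sub}}$, in which the base decision tree reproduces the upper splits of the client na\"{i}ve tree and the client sub-trees coincide with its lower portion, is not justified by the hypotheses and is false in general. The base decision tree is learned from the $IT$ partition alone, ignoring $A_s$, while the na\"{i}ve tree is learned from the reconstructed full data; if $A_s$ is a strong predictor, greedy split selection will place it at or near the root of the na\"{i}ve tree, so the two trees need not share any prefix structure at all, and the leaves of the base tree do not correspond to subtrees of the na\"{i}ve tree. The ``similar number of splits'' hypothesis is a statement about split \emph{counts}, not about which attributes are split on or in what order, so it cannot deliver the structural coincidence you yourself flag as the hard part. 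Consequently $f_{\text{BDT}}$ is not a quantity that can be ``genuinely subtracted'' from the na\"{i}ve cost.

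The fix is to drop the structural correspondence entirely, and you already have the ingredient needed to do so: the base-tree leaves partition $D$, so $\sum_i n_i = n$. This is exactly the paper's route. It takes the split-count hypothesis at face value --- each client sub-tree on a leaf of $n_i$ tuples has $O(m)$ splits, leaves, and depth, hence costs $O(n_i m^2)$ --- and sums over the partition to get a total client cost of $O(\sum_i n_i m^2) = O(nm^2)$, which matches the $O(nm^2)$ bound for the na\"{i}ve tree under the same split-count assumption; the paper then phrases this as a contradiction with the assumption that the collaborative cost is strictly larger. That argument is a coarse asymptotic count rather than a term-by-term subtraction, and it is weaker than what you were aiming for, but it is the only comparison the stated hypotheses actually support. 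Your treatment of the linear terms (reconstruction totaling $O(l\vert D\vert)$ over the leaf partition, cheaper per-prediction decryption) is fine and is consistent with the paper's remark that the same bound extends to encryption, decryption, and network costs.
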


\begin{proof}
Assume that ${D}$ is ${O(n \times m)}$ where ${n}$ is the
number of rows and ${m}$ is the number of attributes. Let us also assume
that the client cost of collaborative decision tree learning is bigger than
the client na\"{i}ve learning cost. Given a base decision tree with ${O(m)}$
leaves, depth, and split (worst case tree type); the total cost
of base decision tree learning is ${O(nm^2)}$. This cost is excluded in
client cost since it is not executed by the client. Given a client sub-tree on
the $i$th base decision tree leaf (${n_i}$ is number of rows) with ${O(m)}$ splits, 
leaves and depth (same as base decision tree); the cost of learning a
sub-tree is ${O(n_i m^2)}$. The total cost of collaborative decision tree
learning is ${O(\sum_i n_i m^2)}$  which is ${O(nm^2)}$ (${1 \leq i \leq m}$).
Given client na\"{i}ve learning's resulting model with ${O(m)}$ splits,
leaves and depth (same as base decision tree); the total client
cost is ${O(nm^2)}$. Client na\"{i}ve learning has a client cost that is
equal to the collaborative decision tree learning's client cost. This
result contradicts to the basic assumption. Thus, collaborative
decision tree learning cost cannot be bigger than client na\"{i}ve
learning cost by contradiction.
\end{proof}

The same theorem can be developed for encryption, decryption and network
costs as well (intrinsic). The number of splits assumption in the cost theorem is a
worst case assumption for cost calculation. In practice, the sub-tree on the ith
leaf of the base decision tree will be much simpler (have fewer splits)
than both the base decision and client na\"{i}ve learning trees.
Client sub-trees would be learned from less data and
less data yields the simple subtrees having fewer splits. Fewer splits mean 
low training, inference, encryption/decryption and network costs for client 
in collaborative decision tree learning.

\section{Experiments}\label{sec:exper}

We now compare our collaborative decision tree learning (cdtl) with
the client na\"{i}ve (cnl) and CDBS learning (cdbsl) on four
datasets from the UCI collection: adult, vote, autos and Australian
credit.  We evaluate two things:  The classification accuracy
(presumably better than clbs, but worse than client na\"{i}ve),
and the cost for those performance improvements. 

The Adult dataset is composed of US census data. An individual's
income is the class attribute (more than 50K vs less than 50K).
It has 48842 tuples where each tuple has 15 attributes.
The Vote dataset
contains 485 tuples where each tuple has 16 binary attributes. 
An attribute is the vote of a senator in a session.
A senator's party affiliation is the class attribute
(democrat vs republican).
The Autos datasets contains 205 tuples where
each tuple has 26 attributes.
The class label
is either symboling (risk rating) or price of a car. Autos dataset's
continuous attributes are discretized in our experiments and a binary
class label is created from price of a car (low price vs. high price).
The Australian credit dataset has 690 tuples where each tuple has 16 
attributes. 
The class attribute is whether
a credit application is approved or not (${+}$ vs. ${-}$).
We chose these four because they are large enough to demonstrate
performance differences, and reasonably challenging
for decision tree learning.
The reader is advised
to visit \cite{UCIrepository} to learn more about the structure of the
datasets. 

Experiments 1 and 2 are made on auto and vote datasets respectively, 
because they can simulate a real world example of \emph{$l$-diverse} data.
While privacy may not be a real issue for this data, we give an
examples below that we feel are a reasonable simulation of a
privacy-sensitive scenario.  (True privacy-sensitive data is hard
to obtain and publish results on, precisely because it is privacy-sensitive.)
Experiment 1 uses attribute symboling (risk rating) as the sensitive
attribute and predicts the attribute price. 
Suppose that there is
a dealership outsourcing their cars' pricing data. Clients might not want
to buy some of their cars because of high risk factor (high insurance
premium), so $l$-diversity is preserved for symboling. 
Experiment 2 is done on the vote dataset. It sets physician
fee-freeze as the sensitive attribute. 
Suppose that senate's voting records are outsourced to a cloud database. 
Voting records in public sessions are available and they can be used with
party affiliation to identify a senator's identity. Thus, the voting records
of a private session should be \emph{l-diverse}. 

Experiments 3 and 5 are made on the Australian credit dataset and adult
dataset.  Decision tree models fit well on these datasets.
These experiments show how the cdtl models behave when the underlying
data is convenient for decision tree learning. Experiment 3 uses Australian
credit dataset. It assigns attribute A9 as sensitive attribute and predicts
the class attribute. Experiment 4 uses the
adult dataset. It assigns attribute relationship as sensitive attribute and
predicts the class attribute. 
Relationship and A9 are the strongest predictors
of class labels in adult and Australian credit datasets. Experiment 4 and 5
measure together the effect of the sensitive attribute's predictor power
given a fixed dataset. In experiment 5, the adult dataset is used again with
sensitive attribute age that is a moderate predictor. 
Experiments 4 and 5 use the tuples which belong to individuals in non-private
work class. So, the experiments are made on a subsample of 14936 tuples. 

On each experiment, we apply cdbsl, cnl, cdtl with 10-fold cross validation. 
We measure accuracy, memory savings and execution time savings.
Weka J48 is used for decision tree learning with reduced error pruning.
20\% of the training sets are used for reduced error pruning. Given a
training/test pair of 10-fold cross validation, the prune set is chosen
randomly once so that cdbsl, cnl, cdtl models are compared within the
same model space (same pruning set and training set for each algorithm).
The experiments on vote and australian credit datasets learn binary
decision trees whereas the experiments on auto and adult datasets
learn non-binary decision trees. Experiments are done using a physically 
remote cloud server; and a laptop with Intel i5 processor and 4 GB RAM. 
Internet connection speed was 100 Mbps.

\begin{equation}
\label{eq4.1}
ets=\frac{cdtl \, execution \, time}{cnl \, execution \, time}
\end{equation}

\begin{equation}
\label{eq4.2}
ms=\frac{cdtl\,memory\,requirement }{cnl\,memory\,requirement}
\end{equation}

Equations \ref{eq4.1} and \ref{eq4.2} calculate \emph{execution time 
savings (ets)} and \emph{memory savings (ms)} respectively in the experiments. 
Savings are maximized as both formulas approach to zero.
In contrary, savings are minimized as both formulas approach 1.
Execution times (Eq. \ref{eq4.1}) include client's encryption/decryption, 
network and learning/inference costs on a training set and test set pair
(ith iteration of cross validation). In equation \ref{eq4.2}, the number 
of tuples in training set is the cnl's memory requirement whereas the 
number of tuples in the base decision tree's (BDT) biggest leaf is the 
cdtl's memory requirement. Since the client learns sub-trees on the fly from
the unrefined leaves, the biggest leaf is the memory upper bound.

Figure \ref{fig4} and Figure \ref{fig5} provides box plots showing 
elapsed time and memory savings
measurements on each of the 10 folds. Cdbsl is not shown, as
the client cost is 0.
The blue dots on the
boxplots show the mean time and memory savings.
Savings graphs exhibit visually 
the tradeoff between ms and ets. 
Given a dataset, if the memory savings are high, the time requirements
are low (as expected.) 
It is expected 
since the high ms indicates a complex base decision 
tree model in cdtl. The client needs to do more improvement in cdtl since 
the leaf provided is already a bad predictor. In addition,  high ms and low ets 
can lead to complex decision 
trees that can produce overfitting. On the other hand,
low ms indicate a simple base decision tree in cdtl.
Little remains to be done at the client
since the leaf provided is already a good predictor. 

\begin{figure}
\includegraphics[width=85mm,height=100mm]{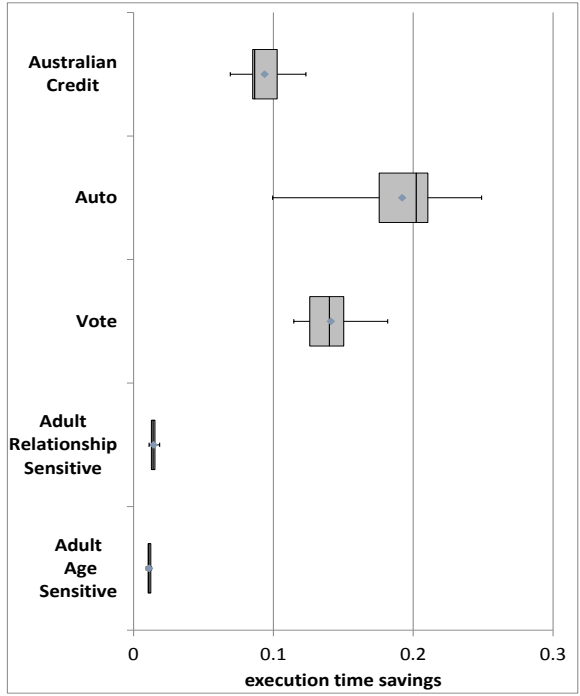}
\caption{Execution Time Savings Graph}
\label{fig4}
\end{figure}

\begin{figure}
\includegraphics[width=85mm,height=100mm]{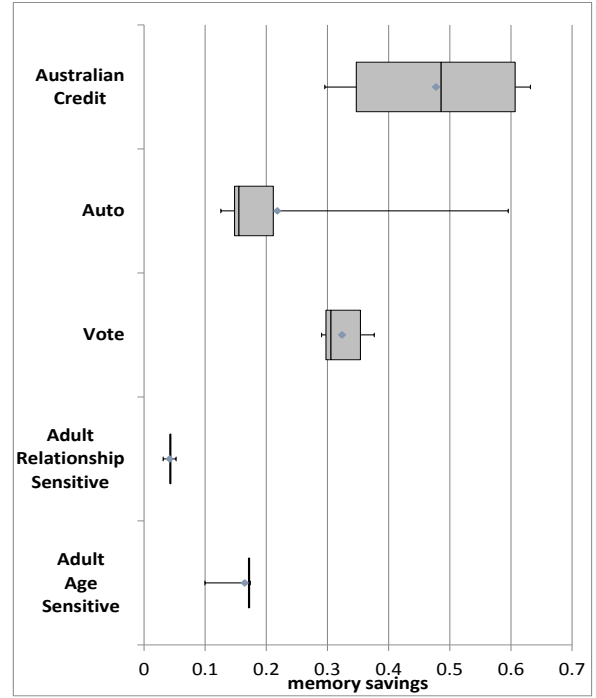}
\caption{Memory Savings Graph}
\label{fig5}
\end{figure}

\begin{figure}
\includegraphics[width=85mm,height=100mm]{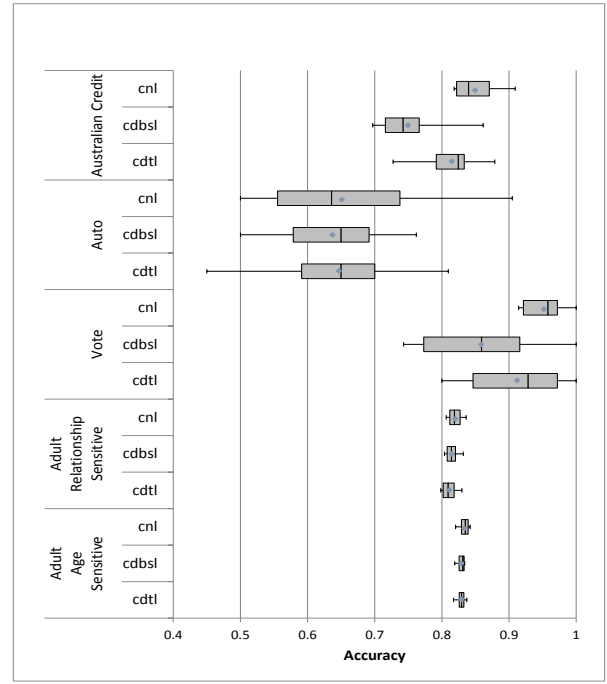}
\caption{Accuracy Graph}
\label{fig6}
\end{figure}

Figure \ref{fig6} provides measured prediction accuracies in 
the experiments. On average, the cnl decision trees have the best 
accuracy, as expected. Cdbsl decision trees have the worst accuracy and 
cdtl decision trees is generally somewhere between. 
This accuracy trend is 
expected since cdbsl gives a decision tree which is essentially
the same as the base decision tree in cdtl. So, cdtl's accuracy 
values show the effect of client improvements which are done 
to the cdbsl model. The exception is in experiment 4 (the Adult
dataset).  The sensitive attribute is not a very good predictor.
This exceptional case shows that the collaborative 
model becomes too complex after client improving (overfitting). 
Execution time savings and memory savings graphs justify this fact
(Figs. \ref{fig5} and \ref{fig6}). 
A possible solution to avoid cdtl overfitting is memory savings
threshold (threshold for BDT leaf size on CDBS). However, it is hard 
to define an exact threshold value. Given a fixed training set 
on CDBS, cdtl can be learned with various BDT thresholds. The 
threshold having the best accuracy on the pruning set can be 
chosen and client improvements can be applied on this model.

\section{Conclusion}

This paper proposes a decision tree learning method for
outsourced data in an anatomization scheme. A real world 
prediction task is defined for anatomization.  Collaborative
decision tree learning (cdtl), which uses cloud database server (CDBS) 
and client, is studied to achieve this prediction task. 
Cdtl is proven to preserve the privacy of data
provider. Cdtl is tested on 
various datasets and the results show that fairly accurate 
decision trees can be built whereas client's learning and inference
costs are reduced remarkably.

The next challenge is to extend this work in a new framework
such that there is no client processing while accurate decision trees
are learned. Another direction is to observe how the collaborative
decision tree learning algorithm performs relative to the
recent differential privacy decision trees.

\section{Acknowledgments}
We wish to thank to Dr. Qutaibah Malluhi and Dr. Ryan Riley 
for their helpful comments throughout the preparation of this work.
We also wish to thank to Qatar University for providing physical
facilities required for experimentation. This publication was made 
possible by NPRP grant 09-256-1-046 from the Qatar National 
Research Fund. The statements made herein are solely the 
responsibility of the authors.

%
\bibliographystyle{abbrv}
\bibliography{ref}  
%
%


\end{document}